\documentclass{article}

\usepackage{microtype}
\usepackage{graphicx}
\usepackage{subfigure}
\usepackage{booktabs} 
\usepackage{algorithm2e}
\usepackage{hyperref}
\usepackage{amstext, amscd, amsthm, amssymb}
\usepackage{amsthm}
\usepackage{thmtools, thm-restate}
\usepackage{thm-patch, thm-kv, thm-autoref}

\theoremstyle{plain}

\theoremstyle{definition}

\theoremstyle{remark}

\usepackage[accepted]{icml2024}

\PassOptionsToPackage{capitalize,noabbrev}{cleveref}

\usepackage[T1]{fontenc}    
\usepackage[utf8]{inputenc} 
\usepackage{hyperref}       
\usepackage{url}            
\usepackage{booktabs}       
\usepackage{nicefrac}       
\usepackage{microtype}      

\usepackage{enumitem}
\usepackage{framed}
\usepackage{perpage}
\usepackage{array,etoolbox}
\usepackage{xspace}
\usepackage[mathscr]{eucal}
\usepackage{mathtools}
\usepackage{bm} 
\usepackage{colors}
\usepackage{local_defs}
\usepackage{amssymb}
\usepackage{multirow}
\usepackage{fontawesome5}

\usepackage[textsize=tiny]{todonotes}

\icmltitlerunning{A Fresh Take on Stale Embeddings: Improving Dense Retriever Training with Corrector Networks}

\sloppy

\frenchspacing

\begin{document}

\twocolumn[
\icmltitle{A Fresh Take on Stale Embeddings: \\ Improving Dense Retriever Training with Corrector Networks}

\icmlsetsymbol{equal}{*}

\begin{icmlauthorlist}
\icmlauthor{Nicholas Monath}{equal,yyy}
\icmlauthor{Will Grathwohl}{equal,yyy}
\icmlauthor{Michael Boratko}{yyy}
\icmlauthor{Rob Fergus}{yyy}
\icmlauthor{Andrew McCallum}{yyy}
\icmlauthor{Manzil Zaheer}{yyy}
\end{icmlauthorlist}

\icmlaffiliation{yyy}{Google DeepMind}

\icmlcorrespondingauthor{Will Grathwohl}{wgrathwohl@google.com}
\icmlcorrespondingauthor{Nicholas Monath}{nmonath@google.com}

\icmlkeywords{Retrieval, Negative Mining}

\vskip 0.3in
]

\printAffiliationsAndNotice{$^*$Equal Contribution. Order determined by coin flip.} 

\renewcommand{\P}{P}

\def\drm{{\mathrm{d}}}

\newcommand{\DCal}{\mathscr{D}}
\newcommand{\SCal}{\mathscr{S}}

\newcommand{\FC}{\mathcal{F}}
\newcommand{\GC}{\mathcal{G}}
\newcommand{\HC}{\mathcal{H}}
\newcommand{\OC}{\mathcal{O}}
\newcommand{\WC}{\mathcal{W}}

\newcommand{\RF}{\mathfrak{R}}

\newcommand{\targetword}{target\xspace}
\newcommand{\queryword}{query\xspace}
\newcommand{\targetsword}{targets\xspace}
\newcommand{\queriesword}{queries\xspace}

\newcommand{\parameters}{\Theta}
\newcommand{\approxparameters}{\Psi}

\newcommand{\setoftargetstraining}{Y}
\newcommand{\targetencoder}{\ensuremath{g}}
\newcommand{\staletargetencoder}{\ensuremath{\targetencoder'}}
\newcommand{\queryencoder}{\ensuremath{f}}
\newcommand{\approxmodel}{\ensuremath{h}}
\newcommand{\numtargets}{\ensuremath{N}}
\newcommand{\dimensions}{\ensuremath{D}}
\newcommand{\point}{\ensuremath{x}}
\newcommand{\target}{\ensuremath{y}}
\newcommand{\alltargets}{\ensuremath{\mathcal{Y}}}
\newcommand{\logit}{\ensuremath{s}}
\newcommand{\temperature}{\beta}
\newcommand{\partitionfn}{Z}
\newcommand{\modelterminology}{corrector model\xspace}
\newcommand{\reals}{\mathcal{R}}
\newcommand{\buffer}{B}
\newcommand{\ours}{Ours\xspace}

\begin{abstract}
    In dense retrieval, deep encoders provide embeddings for both inputs and targets, and the softmax function is used to parameterize a distribution over a large number of candidate targets (e.g., textual passages for information retrieval). Significant challenges arise in training such encoders in the increasingly prevalent scenario of (1) a large number of targets, (2) a computationally expensive target encoder model, (3) cached target embeddings that are out-of-date due to ongoing training of target encoder parameters. This paper presents a simple and highly scalable response to these challenges by training a small parametric {\it corrector network} that adjusts stale cached target embeddings, enabling an accurate softmax approximation and thereby sampling of up-to-date high scoring ``hard negatives.’’ We theoretically investigate the generalization properties of our proposed target corrector, relating the complexity of the network, staleness of cached representations, and the amount of training data. We present experimental results on large benchmark dense retrieval datasets as well as on QA with retrieval augmented language models. Our approach matches state-of-the-art results even when no target embedding updates are made during training beyond an initial cache from the unsupervised pre-trained model, providing a 4-80x reduction in re-embedding computational cost.
\end{abstract}
\section{Introduction}

The softmax function, paired with deep neural encoder models, 
is often the parameterization of choice for discrete distributions
over many targets such as in classification \cite{logeswaran2019zero,yu2020pecos},
retrieval \cite{reddi2019stochastic,xiong2020approximate}, or reinforcement learning
\cite{dulac2015deep,gottipati2020learning}. 
This approach, often called a ``dual encoder,'' employs two separate deep networks, one to map an input to a fixed dimensional vector, another to map targets to the same vector space.  We then compute softmax logits as the inner product of an input vector to each target vector \cite{gillick2019learning,karpukhin2020dense,xiong2020approximate}.

With the typical softmax cross-entropy loss, exact training of the parameters of these two encoder networks would involve using the current parameters to compute the logits for all targets, requiring running the target encoder on all targets at every step of training.  Of course, this far-too-burdensome approach is not used in practice.  Instead, various approximations have been developed \cite{reddi2019stochastic,rawat2020doubly,lindgren2021efficient,xiong2020approximate,monath2023improving}.
The typical approximation computes a truncated softmax on a sampled subset of targets. These approaches store a \emph{cache} of ``stale'' encoded representations of targets and uses the stale, cached representations to draw samples from the softmax-parameterized distribution during training \cite{lindgren2021efficient,izacard2022few}. Previous work has used these stale representations amidst other approximations such as index structures \cite{xiong2020approximate,monath2023improving}, kernel-methods \cite{rawat2019sampled}, and focusing training on subsets of targets \cite{reddi2019stochastic}.  However, inevitably, the staleness of the target embeddings causes training regret.

In this work, we present a simple, general purpose method for addressing staleness in softmax-parameterized categorical distributions that is scalable enough to be updated at every step of training. Our approach improves upon an existing stale approximation using a learned \emph{target corrector network}. The target corrector network, inspired by recent work on training continuous energy-based models \cite{han2020joint,grathwohl2020learning,grathwohl2021mcmc}, is a small parametric model that accounts for the discrepancy between the stale approximation and unnormalized logits from the true distribution. By learning to improve upon the stale approximation, the target corrector network can be used to produce a more accurate approximation to the target distribution. We further extend beyond training large output space classifiers to latent variable retrieval augmented language models. 

In summary, the contributions of this paper are:

\textbf{Methodological} (\S \ref{sec:Method}) - We describe a novel training procedure for large output space models. It is based on approximating softmax-parameterized categorical distributions by using a parametric target corrector network that learns to improve stale approximations of logits.

\textbf{Theoretical} (\S \ref{sec:Analysis}) - We analyze the generalization properties of the corrector networks in terms of the discrepancy between the stale approximation and the true distribution, the complexity of the network, and the amount of training data. 

\textbf{Empirical} (\S \ref{sec:Experiments}) - 
      We evaluate our approach in training both dense retrieval models and latent variable retrieval augmented language models. Our approach matches the performance of much more computationally intensive approaches at a fraction of the computational expense.

\vspace{-3mm}
\section{Background}
\label{sec:Background}

\textbf{Softmax\hspace{1mm}} Given an input point $\point$, a distribution over a set of $\numtargets$ \emph{targets}, $\alltargets$, parameterized by the softmax function is:
\begin{equation}
\vspace{-1mm}
    P(\target|\point) = \frac{\exp(\temperature\logit_{\point,\target})}{\partitionfn_\point \triangleq \sum_{\target'\in \alltargets} \exp(\temperature\logit_{\point,\target'})},
    \label{eq:softmax}
\end{equation}
where $\temperature$ is the temperature. In this paper, we focus on applications in retrieval and latent variable models. For example, in Natural Questions \cite{kwiatkowski2019natural}, $\point$ refers to a question and targets, $\target$, correspond to Wikipedia passages.

\textbf{Dual-Encoders\hspace{1mm}} We compute the unnormalized logits, $\logit_{\point,\target}$, using a factorized representation. Deep parametric models, \emph{dual-encoders}, map the input, $\point$, and target, $\target$, to $\dimensions$-dimensional vectors, denoted $\queryencoder(\point; \parameters)$, and $\targetencoder(\target; \parameters)$:
\begin{equation}
    \vspace{-2mm}
    \logit_{\point,\target} = \langle \queryencoder(\point; \parameters), \targetencoder(\target; \parameters) \rangle.
    \label{eq:logits}
    \vspace{-2mm}
\end{equation}

\textbf{Training\hspace{1mm}} For a task-specific loss, $\mathcal{L}$, such as cross-entropy, dual-encoder parameters are optimized by gradient descent  \cite{rawat2019sampled}. However, exact computation of the normalizing constant, $\partitionfn_\point$, is typically intractable during training, since it would require computing $\targetencoder(\target)$ for millions or billions of targets. Instead of $P(\target|\point)$ in $\mathcal{L}$, a tractable (yet biased) approximation is to optimize the \emph{truncated} softmax, $\tilde{P}(\target|\point)$, including only a subset of targets $S(\alltargets) \subset \alltargets$:
\begin{equation}
    \tilde{P}(\target|\point) = \frac{\exp(\temperature\logit_{\point,\target})}{\sum_{\target'\in S(\alltargets)} \exp(\temperature\logit_{\point,\target'})},
    \label{eq:trunc_sm}
\end{equation}
\textbf{Uniform Sampling Approximation\hspace{1mm}} A simple approach is to define $S(\alltargets)$ to be a uniformly sampled subset of $\alltargets$  \cite{karpukhin2020dense}. 
The method's bias decreases with more samples. However, since the samples are uniform, a large number of samples may be required.

\textbf{Top-K / Similarity-based Sampling Approximations\hspace{1mm}} We can instead use an informed strategy using $\targetencoder(\target)$ that would select higher probability targets by sampling using similarity scores via Gumbel-Max \cite{lindgren2021efficient}, or using the top-k targets in terms of inner product \cite{xiong2020approximate}.
Work has considered efficient approximations to find these top $k$ targets without having to compute $\targetencoder(\target)$ for all $\target \in \alltargets$ \cite{xiong2020approximate,monath2023improving}.

\textbf{Initialization\hspace{1mm}} We initialize the parameters of the dual encoders, $\parameters$, using pre-trained models, such as pre-trained language models, T5 and GTR \cite{devlin2019bert,raffel2020exploring,ni2022large}.

\textbf{Stale Cached Representations\hspace{1mm}} 
When we are training the parameters, $\parameters$, each target's vector according to $\targetencoder(\target;\parameters)$ changes at each step of training. Therefore, a commonly used approach is to define an approximation, $\staletargetencoder(\target)$, that is a lookup for a ``stale''  cached embedding for the given target. The stale embedding comes from running the target encoder at a particular time step $t$, of training, and caching the result, i.e., $\targetencoder(\target,\parameters_t)$ in a buffer, $B \in \reals^{|\alltargets| \times D}$, i.e. $\staletargetencoder(\target_i) \triangleq \buffer_{\target_i}$.
To find the top-$k$ targets for input $\point$ we compute approximate logits $\buffer^T \queryencoder(\point) \in \reals^{|\alltargets|}$ and select the top-$k$ targets to define $S(\alltargets)$. Even before training, we can use the pre-trained model to produce embeddings for all targets $\buffer_{\target_i} = \targetencoder(\target_i; \parameters_0)$.  While $\buffer$ may seem large, this is considerably more efficient than exact computation and is possible, on accelerators, for $|\alltargets|$ in the tens of millions. 

The bias of this approach (and subsequent degradation in performance) depends on the \emph{staleness} or {drift} of the embeddings, i.e., $||\buffer_{\target_i} - \targetencoder(\target_i)||$ which will increase as we update the parameters of $\targetencoder(\target)$. This can be mitigated by recomputing $\buffer$ periodically throughout training (at notable cost). This approach of periodically recomputing has been used \cite{guu2020retrieval,izacard2022few,monath2023improving}, but there is still much room for improvement.

\vspace{-3mm}
\section{Improving Training with Target Correctors}
\label{sec:Method}
Our proposed approach builds upon these stale buffer approximations by using an additional parametric model. The additional model aims to improve upon the stale $\staletargetencoder(\target)$ to yield a better approximation of $\targetencoder(\target)$.

We refer to this additional parametric model as a \emph{target corrector network},  $\approxmodel(\cdot;\approxparameters)$ or simply $\approxmodel(\cdot)$ when the parameters $\approxparameters$ are not pertinent. This target corrector network takes as input the existing stale vector embedding, $\staletargetencoder(\target)$, and yields the following approximation of the softmax function:
\vspace{-1.5mm}
\begin{equation}
    P_\approxmodel(\target|\point) \propto \exp(\temperature\langle \queryencoder(\point), \approxmodel \circ \staletargetencoder(\target) \rangle).
    \label{eq:phyx}
    \vspace{-3mm}
\end{equation}

With significantly fewer parameters than a typical dual-encoder, i.e., $|\approxparameters| \ll |\parameters|$, this small parameteric model is efficient enough to provide approximately fresh representations of every target at every training step. The target corrector network presents interesting research questions regarding whether the network can obviate the need for re-embedding, what kinds of staleness or drift can be effectively modeled, and how much training data is required. 
\begin{figure}
    \centering
    \includegraphics[width=0.9\columnwidth]{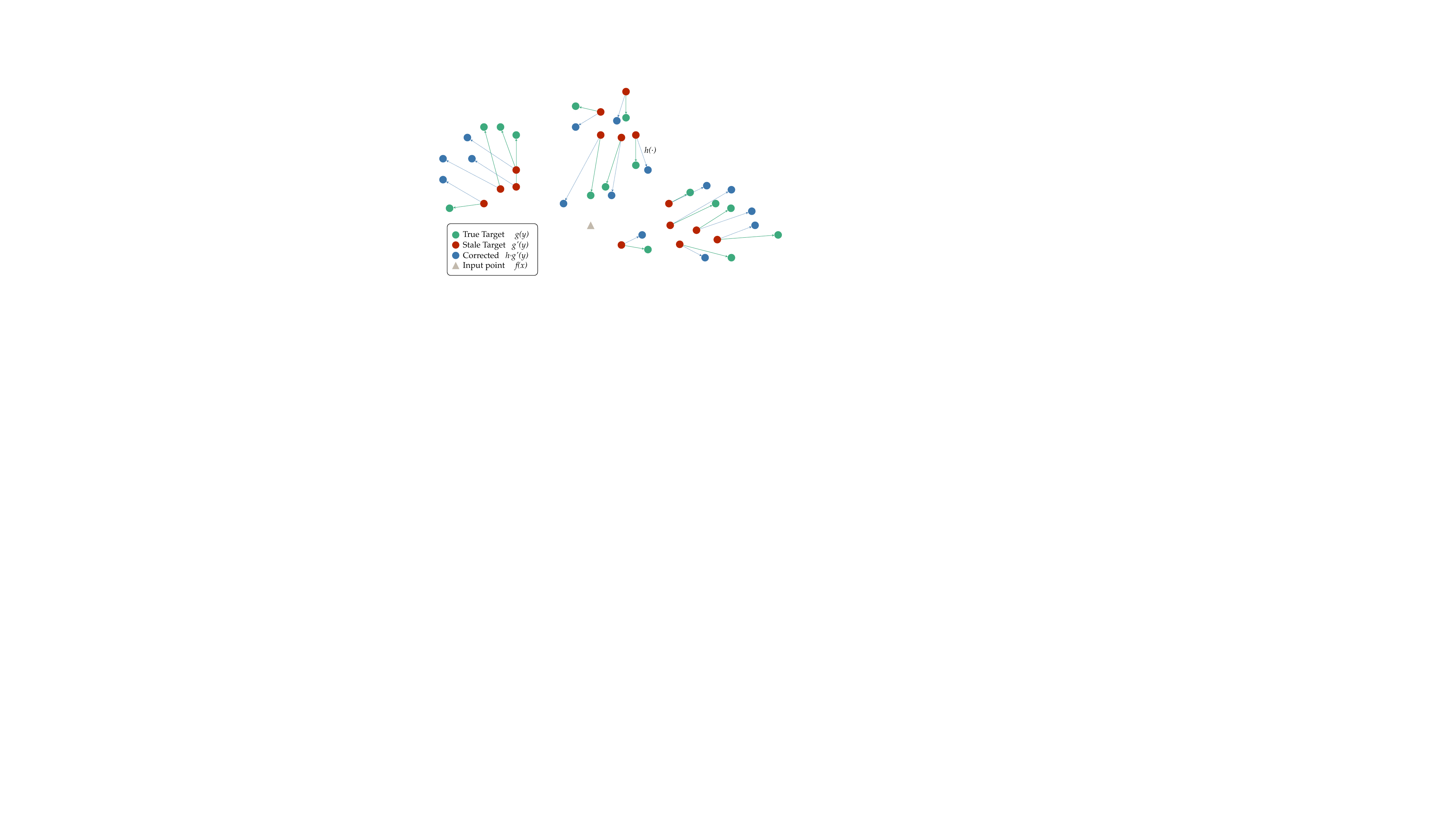}
    \caption{\textbf{Target Corrector Networks}. The corrector network, $\approxmodel(\cdot)$, moves the approximate representations of targets, $\staletargetencoder(\cdot)$ to be closer to their true positions, $\targetencoder(\cdot)$. The corrector network is trained to approximate how the targets are transformed from $\staletargetencoder(\cdot)$ to $\targetencoder(\cdot)$.}
    \label{fig:main}
    \vspace{-8mm}
\end{figure}

\textbf{Warmup: Training the corrector network in isolation\hspace{1mm}} 
We begin by considering how we would train only the parameters of the target corrector network, independently of the dual-encoders $\queryencoder(\point)$ and $\targetencoder(\target)$. Afterwards, we present an algorithm for jointly training the target corrector network and the dual-encoders.
To train the parameters $\approxparameters$ of the corrector network, $\approxmodel(\cdot; \approxparameters)$, we collect training examples of input data points $\point_i$, the exact target embeddings $\targetencoder(\target)$, and stale embeddings $\staletargetencoder(\target)$ for a subset of targets $S(\alltargets)_{i} \subset \alltargets$, i.e., $\{(\queryencoder(\point_i), \targetencoder(\target_b),\staletargetencoder(\target_b)) \ | \ \target_b  \in  S(\alltargets)_{i}\}$. 

We consider two loss functions for training $h$: the mean-squared error between representations given by $\targetencoder(\target)$ and the corrected representations $\approxmodel \circ \staletargetencoder(\target)$ (Eq.~\ref{eq:mse}) and the cross entropy loss between the truncated softmax using $\targetencoder(\target)$ and truncated softmax using $\approxmodel \circ \staletargetencoder(\target)$ (Eq.~\ref{eq:ce}):
\begin{align}
    \ell_{MSE}(y_i) &= || \targetencoder(y_i) - \approxmodel \circ \staletargetencoder(y_i;\approxparameters)  ||_2 \label{eq:mse} \\ 
    \ell_{\drm}(y_i) &= \log \tilde{P}(\target|\point) - \log \tilde{P}_\approxmodel(\target|\point) \label{eq:ce}\\
    \tilde{P}_{\approxmodel}(\target|\point) &= \frac{\exp(\temperature\langle \queryencoder(\point), \approxmodel \circ \staletargetencoder(\target; \approxparameters) \rangle)}{\sum_{ {\target' \in S(\alltargets)_i}} \exp(\temperature\langle \queryencoder(\point), \approxmodel \circ \staletargetencoder(\target';\approxparameters) \rangle)}. \nonumber
\end{align}
where $\tilde{P}(\target|\point)$ is the truncated softmax $g(y)$ (Eq.~\ref{eq:trunc_sm}). The mean-squared error loss directly tries to match the target encoder model's embeddings. 
The cross-entropy loss down-weights the importance of targets $\target$ which do not contribute substantial probability to $P(\target|\point)$ and allows for greater use of model capacity. 
The parameters of the target corrector networks are optimized using gradient descent.
Empirically, we find the cross-entropy objective to perform slightly better (Table~\ref{tab:nq_gtr_table_combo}) and focus the presentation on cross-entropy.

\textbf{Jointly Training Corrector Networks \& Dual-Encoders\hspace{1mm}}
We present a method (Algorithm~\ref{alg:training}) for simultaneously training dual-encoders for a given task (e.g., retrieval or equivalently large output-space classification)
and the target corrector network. The training algorithm will optimize both the 
parameters of the target corrector network and additionally use the corrector network to approximate the softmax. Each step consists of: (1) using the corrector network to provide an approximately updated representation of every target, (2) picking a subset of targets for the truncated softmax using the output of the  corrector network, (3) computing a task loss for the dual-encoder models and loss for the corrector networks, (4) updating, according to their respective losses, the parameters for both the dual-encoders and the corrector networks using gradient descent. 

In more detail, we are given task training data, $X = \{(x_1,y_1),\dots, (x_m,y_m)\}$. We are given a task loss function $\mathcal{L}$ and a corrector network loss $\ell$. The dual-encoder models are $\queryencoder(\point),\targetencoder(\target)$ and their initial parameters are $\parameters_0$. 
Prior to the first training step, we instantiate a buffer of the targets' representations, $\buffer_\target = \staletargetencoder(\target) = \targetencoder(\target;\parameters_0)$. We will avert the need for the expensive updating of the buffer by re-embedding targets with the target encoder.
In each step, we sample a training point and label pair $\point_i,\target_i$ from $X$. We apply the target corrector network to all of the stale representations in the buffer to obtain $\approxmodel \circ \staletargetencoder(\target) \ \forall \target \in \alltargets$. This computation does not require running a dual-encoder; we use the cached buffer representation of each target as input to the corrector network. The corrector network is typically a two-layer MLP and hence efficient enough to be used in this way. With these representations from $\approxmodel(\cdot)$, we sample (or select exact top-$k$) targets according to $P_\approxmodel(\target|\point)$ (Eq.~\ref{eq:phyx}) to form a subset of targets $S_{\point_i}(\alltargets)$ for the truncated softmax. 

Given this subset, we compute the task and correction losses and update their respective model parameters. First, we compute the task loss, which is cross-entropy. The task loss will only be used to update the parameters of the dual-encoders, $\parameters$, not the parameters of the target corrector network. We compute the truncated softmax $\tilde{P}(\target|\point) \propto \exp(\temperature\langle \queryencoder(\point), \targetencoder(\target) \rangle)$ (Equation~\ref{eq:trunc_sm}). We define a one-hot $P^\star$ according to the training data label $\target_i$. We compute the task specific loss $\mathcal{L}$ as a function of $\tilde{P}$ and $P^\star$, and update the dual encoder parameters via gradient descent $\parameters \gets \parameters - \eta \nabla_\Theta \mathcal{L}$. 

Next, we will use the same sample of targets $S_{\point_i}(\alltargets)$ to compute the target corrector network's loss and parameter update. Importantly, this will only update the parameters of the target corrector network, $\approxparameters$, not the parameters of the dual-encoders. Here we describe the use of the cross-entropy loss. However, an analogous update procedure could be used for other loss functions.  We compute the truncated softmax according to the target corrector network's output: $\tilde{P}_\approxmodel(\target|\point) \propto \exp(\temperature\langle \queryencoder(\point), \approxmodel \circ \staletargetencoder(\target) \rangle)$. We then compute the target corrector network loss, $\ell$, cross-entropy, which tries to align two truncated distributions $\tilde{P}_\approxmodel$ and $\tilde{P}$. The target corrector network's parameters are updated by gradient descent $\approxparameters \gets \approxparameters - \eta \nabla_\approxparameters \ell$. 

Training the target corrector network, which has only a small number of parameters, is much less computationally intensive to train than the dual-encoder model. Furthermore, we are given ``for free'' the representations $\targetencoder(\target)$ since they are used to compute $\tilde{P}$ for the task loss. These representations can then easily be re-used for training the corrector.

The training procedure is summarized in Algorithm
~\ref{alg:training}. At prediction time, the corrector network is not used, instead the trained dual-encoder $\targetencoder(\target, \parameters)$ is used.

\vspace{-2.2mm}
\SetKwComment{Comment}{/* }{ */}
\begin{algorithm}
\caption{Training with target corrector networks}\label{alg:training}
\KwData{Training data $X$, Targets $\alltargets$, Input encoder $\queryencoder(\cdot)$, Target encoder $\targetencoder(\cdot)$, Approximate target encoder $\staletargetencoder(\cdot)$ (buffer $B$), target corrector network $\approxmodel(\cdot)$, temperature $\temperature$, task loss $\mathcal{L}$, target corrector network loss $\ell$, learning rate $\eta$, number of truncated samples $k$}
\While{\emph{Training}}{
 \hspace{1mm} Sample training data $(\point_i, \target_i) \sim X$\\
 Compute $\approxmodel \circ \staletargetencoder(\target)$ for all $\target \in \alltargets$ using the buffer $B$ \\
  Set $S_{\point_i}(\alltargets)$ using $\exp(\temperature\queryencoder(\point_i)^T \approxmodel \circ \staletargetencoder(y))$ via top-$k$  \\
  Include supervised label $S_{\point_i}(\alltargets) \gets S_{\point_i}(\alltargets) \cup  \{y_i\}$ \\
 Define $\tilde{\P}(\target|\point_i) = \frac{\exp(\temperature\queryencoder(\point_i)^T \targetencoder(y))}{\sum_{y'\in S_{\point_i}(\alltargets)}\exp(\temperature\queryencoder(\point_i)^T \targetencoder(y'))}$\\
  Define ${\tilde{\P}_\approxmodel(\target|\point_i) = \frac{\exp(\temperature\queryencoder(\point_i)^T \approxmodel \circ \staletargetencoder(y))}{\sum_{y'\in S_{\point_i}(\alltargets)}\exp(\temperature\queryencoder(\point_i)^T \approxmodel \circ \staletargetencoder(y'))}}$\\
 Define $\P^\star$ to be a one-hot vector for $\target_i$. \\
 Compute task loss $\mathcal{L}$ using $\tilde{\P}$ and $\P^\star$ \\
 Compute correction loss $\mathcal{\ell}$ using $\tilde{\P}$ and $\tilde{\P}_\approxmodel$ \\
 Update dual-encoder parameters $\parameters \gets \parameters - \eta \nabla_\parameters \mathcal{L}$ \\
 Update corrector network parameters $\approxparameters \gets \approxparameters - \eta \nabla_\approxparameters \ell$ 
}
\end{algorithm}
\vspace{-3mm}

\subsection{Latent Variables in Retrieval Augmented Models}
\label{sec:RLM}

Retrieval augmented language models (RLMs) typically consist of two major architectural components, a \emph{retriever} model (e.g., a dual-encoder) and a generative language model or \emph{reader} model
\cite{guu2020retrieval,izacard2020leveraging,izacard2022few}. 
The input to a retrieval augmented language model is a natural language text sequence, $\point$. 
This input text will be encoded using a dual-encoder retrieval model, $\queryencoder(\point)$. Retrieval will be performed over a corpus of targets,   $\alltargets$, returning $k$ targets relevant to $x$, denoted $S_x(\alltargets)$. The reader model takes as input the retrieved targets, $S_x(\alltargets)$, and the text $x$, and generates text.

Concretely, in our experiments, the input text $\point$ is a question. The retrieval corpus contains targets $\target$ corresponding to passages in Wikipedia. The reader model takes as input the question and retrieved passages and generates a short answer to the question. We present the remainder of the section with this question-answering task in mind.

RLMs can be formalized as latent variable models. The softmax function is used to parameterize the distribution over a discrete latent variable, which corresponds to 
the retrieved targets. We use $a$ to refer to the generated sequence of text, i.e., the generated answer:
\begin{equation}
\vspace{-1mm}
    P(a | \point) = \sum_{\target \in S_\point(\alltargets)} P(a | \target, \point) P (\target | \point).
    \vspace{-1mm}
\end{equation}
$P(a|\target, \point)$ is an autoregressive language model. $P(\target|\point)$ is computed by the softmax with logits from Equation~\ref{eq:logits} using the encoder models $\queryencoder(\point)$ and $\targetencoder(\target)$.

When training RLMs, we receive supervision in the form of question, answer pairs, e.g., $\point_i,a_i \sim X$. We do not receive supervision on {which} targets $S_\point(\alltargets)$ should be retrieved. We will learn the parameters of both the reader model and retriever model using these supervised question/answer pairs.

To train the reader and retriever model, we use perplexity distillation \cite{izacard2022few} for retriever loss and negative log-likelihood for the reader loss. Perplexity distillation is computed as the cross-entropy between two truncated distributions, one   being the retriever's $\tilde{P}(\target|\point)$ (Equation~\ref{eq:trunc_sm}) and the other using the reader model to provide a soft-relevance label for each target in $S_\point(\alltargets)$:
\begin{align}
    P_a(\target|\point) = \frac{P(a | \target, \point)}{\sum_{y'\in S_\point(\alltargets)} P(a|y', \point)}.
\end{align}
In words, $P_a(\target|\point)$ normalizes the likelihood scores of the reader model generating the correct answer text when conditioned on the given retrieved target $\target$. The reader's loss function, negative-log likelihood is simply computed using the supervised answer text. The two losses are averaged and parameters optimized with gradient descent.

To facilitate efficient training, we use our proposed target corrector network to select the subset of retrieved targets $S_\point(\alltargets)$ used at training time. This is done in the same way as in Algorithm~\ref{alg:training}, i.e.,  we pick a subset of $k$ targets $S_{\point}(\alltargets)$ for $\point$ according to $\exp(\temperature\queryencoder(\point)^T \approxmodel \circ \staletargetencoder(y))$ via top-$k$ or Gumbel-Max sampling. We can make simple modifications to Algorithm~\ref{alg:training}, which are presented in Algorithm~\ref{alg:rlm_training} to train the RLM. We compute two task-specific losses (perplexity distillation, negative log-likelihood) and optimize both the reader and retriever parameters. We use cross-entropy to train the corrector, which is again only used at training time. At prediction time, the trained retriever model is used.

\begin{algorithm}
\caption{Training RLMs with corrector models}\label{alg:rlm_training}
\KwData{Training data $X$, Targets $\alltargets$, Retriever and Reader Parameters $\parameters$, Correction Model parameters $\Psi$, Input encoder $\queryencoder(\cdot)$, Target encoder $\targetencoder(\cdot)$, Approximate target encoder $\staletargetencoder(\cdot)$ (buffer $B$), corrector model $\approxmodel(\cdot)$, temperature $\temperature$, retriever loss $\mathcal{L}$, reader loss $\mathcal{L}'$, corrector model loss $\ell$, learning rate $\eta$, number of truncated samples $k$}
\While{\emph{Training}}{
 \hspace{1mm} Sample training data $(\point_i, a) \sim X$\\
 Compute $\approxmodel \circ \staletargetencoder(\target)$ for all $\target \in \alltargets$ using the buffer $B$ \\
  Set $S_{\point_i}(\alltargets)$ using $\exp(\temperature\queryencoder(\point_i)^T \approxmodel \circ \staletargetencoder(y))$ via top-$k$  \\
  Define $\tilde{\P}(\target|\point_i) = \frac{\exp(\temperature\queryencoder(\point_i)^T \targetencoder(y))}{\sum_{y'\in S_{\point_i}(\alltargets)}\exp(\temperature\queryencoder(\point_i)^T \targetencoder(y'))}$\\
  Define ${\tilde{\P}_\approxmodel(\target|\point_i) = \frac{\exp(\temperature\queryencoder(\point_i)^T \approxmodel \circ \staletargetencoder(y))}{\sum_{y'\in S_{\point_i}(\alltargets)}\exp(\temperature\queryencoder(\point_i)^T \approxmodel \circ \staletargetencoder(y'))}}$\\
 Define $P_a(\target|\point) = \frac{P(a | \target, \point)}{\sum_{y'\in S_\point(\alltargets)} P(a|y', \point)}.$ \\
 Define $P_\text{LM}(a | \point) = \sum_{\target \in S_\point(\alltargets)} P(a | \target, \point) P (\target | \point).$ \\
 Compute reader loss $\mathcal{L}'$ using $P_\text{LM}(a | \point)$ \\
 Compute retriever loss $\mathcal{L}$ using $\tilde{\P}(\target|\point_i)$ and $P_a(\target|\point)$ \\
 Compute correction loss $\mathcal{\ell}$ using $\tilde{\P}$ and $\tilde{\P}_\approxmodel$ \\
 Update retriever \& reader params $\parameters \gets \parameters - \eta \nabla_\parameters \frac{\mathcal{L} + \mathcal{L}'}{2}$ \\
 Update corrector network params $\approxparameters \gets \approxparameters - \eta \nabla_\approxparameters \ell$ 
}
\end{algorithm}

\vspace{-2mm}
\section{Analysis}
\label{sec:Analysis}

We will explore the generalization of the proposed target corrector network in terms of unseen targets for a particular input data point, and will show the relationship between generalization error, the complexity of the target corrector network $\approxmodel$, and the discrepancy of the stale representations, $\staletargetencoder$, and true representations $\targetencoder$. All proofs are in Appendix~\ref{appendix:proofs}.

Let $\ell : \mathbb{R} \times \mathbb{R} \to \mathbb{R}$ is a loss function for the target corrector network (Eq.~\ref{eq:mse} \& \ref{eq:ce}). 
 
For any point $\point$, consider the distribution given by the softmax using stale approximation $\staletargetencoder$:
\vspace{-2mm}
\begin{equation}
    \tilde{\DCal}_Y \triangleq {P}_{\staletargetencoder}(\target|\point) = \frac{\exp(\temperature  \langle \queryencoder(\point), \staletargetencoder(\target) \rangle)}{\sum_{\target'\in \alltargets} \exp(\temperature  \langle \queryencoder(\point), \staletargetencoder(\target') \rangle)},
\end{equation}
and similarly define ${\DCal}_Y \triangleq {P}_{\targetencoder}(\target|\point)$ as the true distribution, using $\targetencoder$ (Eq.~\ref{eq:softmax}). 

We begin by defining three kinds of risk. 

\textbf{Empirical Risk\hspace{1mm}} On a set of $n$-targets $\tilde{\SCal}_n=\{y_1,...,y_n\}$ sampled from $\tilde{\DCal}_Y$, we minimize the empirical risk: 
\vspace{-1mm}
\begin{equation}
    R_{\ell,{\phi}}(\tilde{\SCal}_n) = \frac{1}{n} \sum_{i=1}^n \ell(\phi(y_i), g(y_i)),
\end{equation}
\vspace{-2mm}
over a function class $\phi\in\FC$.

\textbf{True Population Risk\hspace{1mm}} For generalization error, we are interested in how large the true population risk can become over a function class $\phi\in\FC$.
    \vspace{-1mm}
\begin{equation}
    R_{\ell, \phi}(\DCal_Y) = \mathbb{E}_{Y\sim\DCal_Y}[\ell(\phi(Y), g(Y))],
\end{equation}
We consider the above quantity because we want to ensure good alignment between $g(y)$ and $\phi(y)$ where there is non-trivial probability mass under the true distribution.

\textbf{Stale Population Risk\hspace{1mm}} The stale population risk is defined analogously to true population risk with $\tilde{\DCal_Y}$ as the distribution, over a function class $\phi\in\FC$:
\vspace{-1mm}
\begin{equation}
     R_{\ell, \phi}(\tilde{\DCal}_Y) = \mathbb{E}_{Y\sim\tilde{\DCal}_Y}[\ell(\phi(Y), g(Y))].
     \vspace{-3mm}
\end{equation}

\textbf{Function Classes\hspace{1mm}} The function class $\phi\in\FC$ is large. We will relate this large function class to a restricted class of functions of the form $h \circ g'$ by leveraging the approximate stale representations, $\staletargetencoder$. 
In other words, we restrict $\FC$ to $\FC^{g'} = \{h \circ g' : h \in \HC\}$ where $\HC$ represents the simpler function class mapping $\mathbb{R}^d\to\mathbb{R}^d$ which can express the discrepancy between the stale $g'$ and current $g$.

First, we provide a bound on the gap between the population risk and stale population risk. We formalize this in the following lemma. For ease of notation in this exposition, we define $\GC_{\ell,  \FC}$ as the induced function class:
$\GC_{\ell,  \FC} = \{y \mapsto  \ell(\phi(y), g(y)) : \phi \in \FC\}.$

\begin{restatable}{lemma}{gapTprSpr}\label{lemma:gap}
Given a target encoder $\targetencoder$ and its stale approximation $\staletargetencoder$, the gap between the true population risk and stale population risk is bounded in the following way:
\begin{equation}
    R_{\ell, \phi}(\DCal_Y) - R_{\ell, \phi}(\tilde{\DCal}_Y) \leq \WC(\DCal_Y, \tilde{\DCal}_Y) \leq  \|g - g'\|_1
\end{equation}
where $\WC$ is the Wasserstein distance. Furthermore, if the approximation $\staletargetencoder$ comes from the same neural model as  $\targetencoder$ with parameters perturbed by $u$ as in aforementioned stale approximation, we have: $\|g-g'\|_1 \leq L\|u\|$ with $L$ as the Lipschitz constant. 
\end{restatable}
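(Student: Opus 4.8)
The plan is to handle the two inequalities and the neural-network specialization in turn. For the first inequality, the key observation is that $R_{\ell,\phi}(\DCal_Y)$ and $R_{\ell,\phi}(\tilde{\DCal}_Y)$ are expectations of the \emph{same} function $F_\phi(y)\triangleq\ell(\phi(y),g(y))\in\GC_{\ell,\FC}$ under two different measures, so their difference is $\mathbb{E}_{Y\sim\DCal_Y}[F_\phi(Y)]-\mathbb{E}_{Y\sim\tilde{\DCal}_Y}[F_\phi(Y)]$. I would then invoke the Kantorovich--Rubinstein dual characterization of the Wasserstein-$1$ distance, which bounds this difference by $\mathrm{Lip}(F_\phi)\cdot\WC(\DCal_Y,\tilde{\DCal}_Y)$; since the MSE and cross-entropy losses in $\GC_{\ell,\FC}$ (Eqs.~\eqref{eq:mse} and~\eqref{eq:ce}) are $1$-Lipschitz in $y$ with respect to the metric on $\alltargets$ induced by the embeddings, under the standing boundedness/normalization of representations, this yields exactly the stated bound.

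For the second inequality, $\WC(\DCal_Y,\tilde{\DCal}_Y)\le\|g-g'\|_1$, I would argue through the sensitivity of the softmax. The logit of a target $\target$ changes by $\temperature\langle\queryencoder(\point),\targetencoder(\target)-\staletargetencoder(\target)\rangle$, which Cauchy--Schwarz bounds by $\temperature\|\queryencoder(\point)\|\,\|\targetencoder(\target)-\staletargetencoder(\target)\|$, so summing over targets controls the $\ell_1$-norm of the logit perturbation by a constant multiple of $\|g-g'\|_1$. The softmax Jacobian $\partial p_i/\partial\logit_j=p_i(\delta_{ij}-p_j)$ has $\ell_1\!\to\!\ell_1$ operator norm at most $\tfrac12$, so the total variation distance between $\DCal_Y$ and $\tilde{\DCal}_Y$ is $O(\|g-g'\|_1)$; bounding $\WC\le\mathrm{diam}(\alltargets)\cdot\mathrm{TV}$ and absorbing $\temperature$, $\|\queryencoder(\point)\|$, and the (bounded) embedding diameter into constants consistent with the normalization used above then gives the clean inequality. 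I expect this step to be the main obstacle: one must pin down precisely what ``$\|g-g'\|_1$'' denotes (a sum or expectation over $\alltargets$, and which inner norm on $\mathbb{R}^D$), use the \emph{same} metric on $\alltargets$ here as in the $1$-Lipschitz claim above so the two bounds compose without loss, and check that the constants from softmax sensitivity, the temperature, and the query norm are all $\le 1$ under the normalization in force, so no stray constant survives.

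Finally, for the neural-network case, write $g=\targetencoder(\cdot;\parameters)$ and $g'=\targetencoder(\cdot;\parameters')$ with $\|\parameters-\parameters'\|=\|u\|$; the bound $\|g-g'\|_1\le L\|u\|$ is then immediate once $L$ is taken to be the Lipschitz constant of the parameter-to-embedding map $\parameters\mapsto\targetencoder(\cdot;\parameters)$ in the norms for which $\|\cdot\|_1$ above is defined. For a fixed-architecture network with bounded inputs and weights this constant is finite and can, if one wishes, be made explicit as a product of per-layer operator norms and activation Lipschitz constants, but for the lemma as stated it suffices to invoke this Lipschitz property directly. The first and third parts are essentially bookkeeping once the softmax-perturbation estimate of the second part is in place.
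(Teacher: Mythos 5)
Your proposal follows essentially the same route as the paper's proof: the first inequality via the Kantorovich--Rubinstein dual form of $\WC$ applied to $y\mapsto\ell(\phi(y),g(y))$ (with the loss's Lipschitz constant normalized away), the second by passing through total variation and the Lipschitz/Jacobian bound on the softmax, and the third by Lipschitzness of the encoder in its parameters, so the argument matches step for step. The only difference is bookkeeping: you explicitly track the temperature, query-norm, and diameter constants and flag the ambiguity in what $\|g-g'\|_1$ denotes, whereas the paper absorbs these (its chain actually carries an extra loss-Lipschitz factor $L$ and a harmless factor $\tfrac12$ that are dropped in the final statement).
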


Next, we connect stale population risk to the empirical risk.
\begin{restatable}{lemma}{staleToEmpirical}\label{lemma:staleToEmpirical}
Given a target encoder $\targetencoder$, its stale approximation $\staletargetencoder$, and a set of $n$-targets $\tilde{\SCal}_n=\{y_1,...,y_n\}$ sampled from $\tilde{\DCal}_Y$,
\begin{equation}
R_{\ell, \tilde{\phi}_n}(\tilde{\DCal}_Y) \leq R_{\ell,\tilde{\phi}_n}(\tilde{\SCal}_n) + {\RF}_{\tilde{\SCal}_n}(\GC_{\ell,  \FC}),
\end{equation}
where ${\RF}_{\tilde{\SCal}_n}(\GC_{\ell,  \FC})$ is the Rademacher complexity of $\GC_{\ell,  \FC}$.
\end{restatable}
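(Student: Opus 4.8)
The plan is to prove Lemma~\ref{lemma:staleToEmpirical} as a textbook uniform-convergence bound, exploiting that $\tilde{\SCal}_n$ is an i.i.d.\ sample from the \emph{fixed} distribution $\tilde{\DCal}_Y$ (the encoders $\queryencoder$, $\targetencoder$, $\staletargetencoder$ and the point $\point$ are all held fixed, so $y \mapsto \ell(\phi(y), \targetencoder(y))$ is a fixed function of the random target). The first step is to remove the data-dependence of $\tilde{\phi}_n$: whatever rule produced $\tilde{\phi}_n$ from $\tilde{\SCal}_n$ (in particular the empirical risk minimizer over $\FC$), it is a member of $\FC$, so
\begin{equation}
R_{\ell, \tilde{\phi}_n}(\tilde{\DCal}_Y) - R_{\ell, \tilde{\phi}_n}(\tilde{\SCal}_n) \;\le\; \sup_{\phi \in \FC}\Big( R_{\ell, \phi}(\tilde{\DCal}_Y) - R_{\ell, \phi}(\tilde{\SCal}_n) \Big).
\end{equation}

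Next I would pass to the induced class: setting $G_\phi(y) = \ell(\phi(y), \targetencoder(y))$ we have $R_{\ell,\phi}(\tilde{\DCal}_Y) = \mathbb{E}_{Y \sim \tilde{\DCal}_Y}[G_\phi(Y)]$ and $R_{\ell,\phi}(\tilde{\SCal}_n) = \frac{1}{n}\sum_{i=1}^n G_\phi(y_i)$, so the supremum above is exactly the uniform deviation $\sup_{G \in \GC_{\ell,\FC}}\big(\mathbb{E}[G] - \frac{1}{n}\sum_i G(y_i)\big)$ of empirical means from true means over $\GC_{\ell,\FC}$. I would then run the standard symmetrization argument: introduce a ghost sample $\tilde{\SCal}_n'$ i.i.d.\ from $\tilde{\DCal}_Y$, rewrite $\mathbb{E}[G]$ as the ghost empirical average, move the supremum inside the expectation by Jensen, insert Rademacher signs $\sigma_i$ using that $G(y_i) - G(y_i')$ is symmetric, and split the two halves to arrive at the empirical Rademacher complexity $\RF_{\tilde{\SCal}_n}(\GC_{\ell,\FC})$. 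Taking the expectation over $\tilde{\SCal}_n$ gives the in-expectation form of the claim; for the stated pointwise form one additionally applies McDiarmid's bounded-differences inequality, first to the scalar $\sup_{G}(\mathbb{E}[G] - \frac1n\sum_i G(y_i))$ and then to $\RF_{\tilde{\SCal}_n}(\GC_{\ell,\FC})$, with the resulting lower-order $O(\sqrt{\log(1/\delta)/n})$ terms (and the factor of $2$ from symmetrization) absorbed into the paper's normalization of $\RF$.

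I expect no genuine obstacle here; the only points that need care are (i) a uniform boundedness hypothesis on $\GC_{\ell,\FC}$ so that symmetrization and McDiarmid apply — this follows from bounded embeddings and logits (e.g.\ $\ell_2$-normalized $\queryencoder$ and $\staletargetencoder$, bounded corrector outputs) together with $\ell$ being bounded/Lipschitz on that range, which I would state explicitly as an assumption — and (ii) matching the multiplicative constant to the paper's convention for $\RF$. Measurability of the suprema over $\FC$ is routine and I would not dwell on it. Note this lemma is deliberately stated for the large class $\FC$; the restriction to $\FC^{\staletargetencoder} = \{h \circ \staletargetencoder : h \in \HC\}$, and the further reduction of $\RF_{\tilde{\SCal}_n}(\GC_{\ell,\FC^{\staletargetencoder}})$ to the complexity of $\HC$ via a Lipschitz-contraction (Talagrand) step on $\ell$ and the inner product, is what I would defer to the theorem that combines this lemma with Lemma~\ref{lemma:gap}.
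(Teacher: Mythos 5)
Your proposal follows essentially the same route as the paper's proof: bound the generalization gap of $\tilde{\phi}_n$ by the uniform deviation over $\FC$, pass to the induced class $\GC_{\ell,\FC}$, and invoke the standard symmetrization argument to get the Rademacher complexity term. You are in fact slightly more careful than the paper, which states the bound pointwise with the empirical complexity ${\RF}_{\tilde{\SCal}_n}$ but without the boundedness assumption, factor of two, or confidence term that symmetrization plus McDiarmid formally require --- the $\OC\big(\sqrt{\log(1/\delta)/n}\big)$ term only surfaces later in Theorem~\ref{thm:maintheorem}, exactly as you anticipate.
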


Now, we can relate the complexity of function class $\FC^{g'}$, number of samples $n$, and the discrepancy of the true $\targetencoder$ and stale approximate encoders $\staletargetencoder$:
\begin{restatable}{theorem}{maintheorem}\label{thm:maintheorem}
For a target encoder, $\targetencoder$, its stale approximation, $\staletargetencoder$, and the Rademacher complexity  $\tilde{\RF}_{n}(\GC_{\ell,  \FC^{g'}})$, the true population risk $R_{\ell, \phi}(\DCal_Y)$ is bounded by the following   with probability at least $1-\delta$:
\begin{align}
    R_{\ell, \phi}(\DCal_Y)& \leq T_1 + T_2 + T_3  \\
    T_1 &= R_{\ell,\tilde{\phi}_n}(\tilde{\SCal}_n) \nonumber \\
    T_2 &= {\WC(\DCal_Y, \tilde{\DCal}_Y)}\ {\leq L \|u\|} \nonumber \\
    T_3 &= \tilde{\RF}_{n}(\GC_{\ell,  \FC^{g'}}) + \OC\Big(\sqrt{{\frac{\log(1/\delta)}{n}}}\Big) \nonumber 
\end{align}
\end{restatable}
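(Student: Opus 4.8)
The plan is to prove Theorem~\ref{thm:maintheorem} by a short risk decomposition that glues Lemma~\ref{lemma:gap} and Lemma~\ref{lemma:staleToEmpirical} together, followed by one standard concentration step to pass from the data-dependent empirical Rademacher complexity to its expectation. Throughout, let $\tilde{\phi}_n \in \FC^{g'}$ denote the minimizer of the empirical risk $R_{\ell,\phi}(\tilde{\SCal}_n)$ over the restricted class $\FC^{g'} = \{\approxmodel \circ \staletargetencoder : \approxmodel \in \HC\}$, where $\tilde{\SCal}_n = \{y_1,\dots,y_n\}$ is drawn i.i.d.\ from $\tilde{\DCal}_Y$; the object to control is $R_{\ell,\tilde{\phi}_n}(\DCal_Y)$.

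First I would split the true population risk across the stale distribution:
\begin{equation}
R_{\ell,\tilde{\phi}_n}(\DCal_Y) = \Big(R_{\ell,\tilde{\phi}_n}(\DCal_Y) - R_{\ell,\tilde{\phi}_n}(\tilde{\DCal}_Y)\Big) + R_{\ell,\tilde{\phi}_n}(\tilde{\DCal}_Y). \nonumber
\end{equation}
The first parenthesized term is exactly the gap controlled by Lemma~\ref{lemma:gap}: it is at most $\WC(\DCal_Y,\tilde{\DCal}_Y) \le \|\targetencoder - \staletargetencoder\|_1$, and under the parameter-perturbation model for staleness this is at most $L\|u\|$, which is precisely $T_2$. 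The second term, the stale population risk at $\tilde{\phi}_n$, is bounded by Lemma~\ref{lemma:staleToEmpirical} — applied with the restricted class $\FC^{g'}$ in place of $\FC$ — by $R_{\ell,\tilde{\phi}_n}(\tilde{\SCal}_n) + \RF_{\tilde{\SCal}_n}(\GC_{\ell,\FC^{g'}})$, whose first summand is $T_1$.

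It then remains to replace the empirical Rademacher complexity $\RF_{\tilde{\SCal}_n}(\GC_{\ell,\FC^{g'}})$ by the expected quantity $\tilde{\RF}_n(\GC_{\ell,\FC^{g'}})$ that appears in $T_3$. Assuming $\ell$ is bounded on the relevant domain (which holds for both the MSE and cross-entropy losses of Eqs.~\ref{eq:mse}--\ref{eq:ce} under the standard norm bounds on $\queryencoder$, $\staletargetencoder$, and the corrector outputs), the map $\tilde{\SCal}_n \mapsto \RF_{\tilde{\SCal}_n}(\GC_{\ell,\FC^{g'}})$ satisfies the bounded-differences property with increments $O(1/n)$, so McDiarmid's inequality gives $\RF_{\tilde{\SCal}_n}(\GC_{\ell,\FC^{g'}}) \le \tilde{\RF}_n(\GC_{\ell,\FC^{g'}}) + \OC(\sqrt{\log(1/\delta)/n})$ with probability at least $1-\delta$. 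Folding this slack into $T_3$ and chaining the displays yields the claimed bound. (Equivalently, one can apply McDiarmid directly to the uniform deviation $\sup_{\phi \in \FC^{g'}}\big(R_{\ell,\phi}(\tilde{\DCal}_Y) - R_{\ell,\phi}(\tilde{\SCal}_n)\big)$ and then symmetrize, merging Lemma~\ref{lemma:staleToEmpirical} and the concentration step into one; either route gives the same result.)

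I expect the only real obstacle to be bookkeeping rather than mathematics: one must apply the restriction from $\FC$ to $\FC^{g'}$ consistently in every term — in particular, $\tilde{\phi}_n$ must be the ERM over $\FC^{g'}$ so that Lemma~\ref{lemma:staleToEmpirical} is invoked with $\GC_{\ell,\FC^{g'}}$ and its Rademacher term lines up with $T_3$ — and one must verify that the loss is genuinely bounded along $\approxmodel \circ \staletargetencoder$ so that McDiarmid delivers the stated $1/\sqrt{n}$ rate; this is the step where the implicit boundedness assumptions on the encoders and corrector enter. Everything else is direct substitution into the two lemmas.
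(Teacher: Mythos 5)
Your proposal is correct and follows essentially the same route as the paper's proof: decompose the true population risk across the stale distribution, bound the gap by Lemma~\ref{lemma:gap}, apply Lemma~\ref{lemma:staleToEmpirical} with the restricted class $\FC^{g'}$, and then use a standard McDiarmid-type concentration step to replace the empirical Rademacher complexity $\RF_{\tilde{\SCal}_n}(\GC_{\ell,\FC^{g'}})$ by its expectation $\tilde{\RF}_{n}(\GC_{\ell,\FC^{g'}})$ at cost $\OC\big(\sqrt{\log(1/\delta)/n}\big)$. Your explicit attention to the boundedness of $\ell$ needed for the concentration step is a point the paper leaves implicit, and the paper's additional remark that $\RF_{\tilde{\SCal}_n}(\GC_{\ell,\FC^{g'}}) \leq \RF_{\tilde{\SCal}_n}(\GC_{\ell,\FC})$ (since $\FC^{g'} \subset \FC$) is not needed for the stated bound, so your omission of it is immaterial.
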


Note the following implications of these theoretical results:

   1. If the corrector network $\approxmodel$ is too complicated or there are not enough samples $n$, then $\approxmodel$ overfits and $T_3$,  will dominate.
    
    2. If $\targetencoder$ and $\staletargetencoder$ are very different, then term $T_2$ will dominate.
    
    3. If $\approxmodel(\cdot)$ is too simple and we cannot fit the sampled data well, then $T_1$ will dominate.
    
We empirically explore some of these trade-offs in \S\ref{sec:synth}.
\vspace{-3mm}

\begin{figure}
    \centering
    \includegraphics[width=0.8\columnwidth]{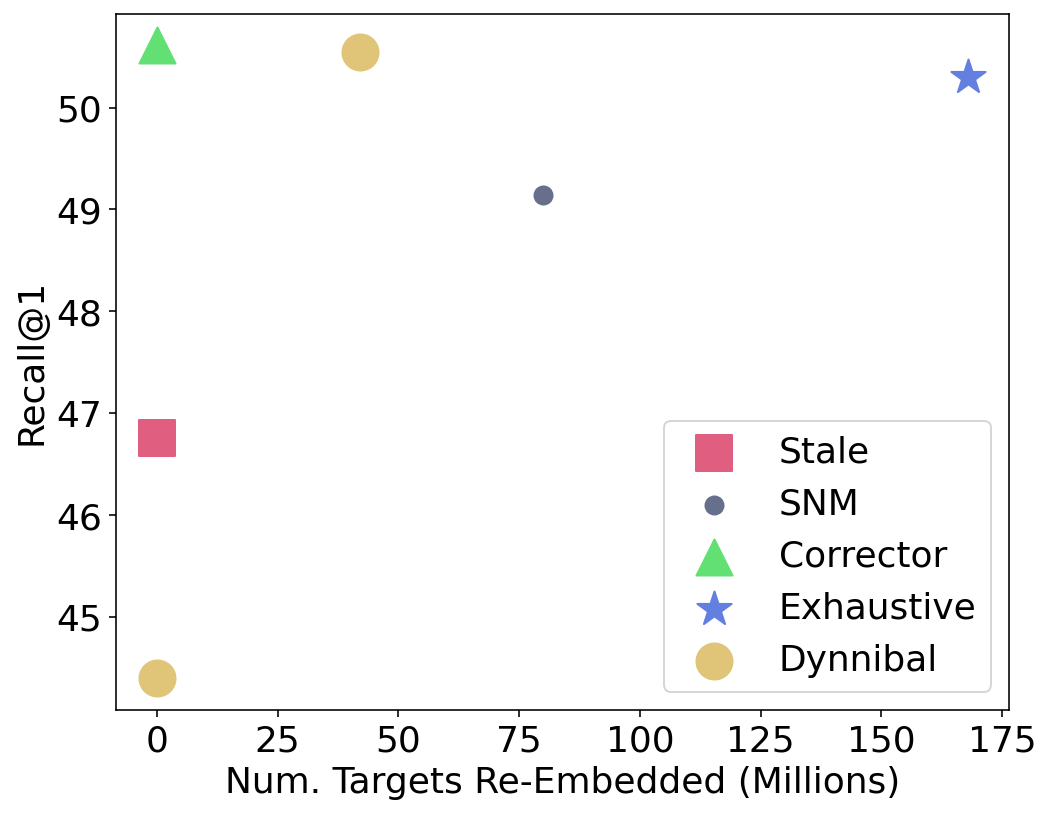}
    \vspace{-3mm}
    \caption{\textbf{NQ Test Recall@1}. We show the computational trade-offs between the amount of re-embedding during training and the task performance (GTR initialization). Our proposed target corrector approach achieves matching task performance at a fraction of the computational expense.}
    \label{fig:tradeoff}
    \vspace{-3mm}
\end{figure}

\section{Experiments}
\label{sec:Experiments}

We evaluate training using target corrector networks in two settings: supervised dense retrieval and retrieval augmented language models. 
We further investigate the properties of the target corrector networks in synthetic experiments.
In summary, the experiments investigate whether training strategies can effectively obviate the need to keep cached buffers of targets up-to-date by re-embedding during training. We answer this question affirmatively with the following highlights:

\textbf{No Re-embedding Needed} Training using target corrector networks matches the task performance of exhaustively re-embed all targets every 500 steps throughout training in both dense retrieval (Table~\ref{tab:nq_gtr_table_combo}) and retrieval augmented language models (Table~\ref{tab:rlm}). Target correctors achieve this without ever needing to re-embed targets during training, yielding significant computational savings (Fig.~\ref{fig:tradeoff}).

\textbf{Best no re-embedding method} Compared to frozen approaches, stale approaches, and Dynnibal without re-embedding, target corrector networks achieve over 10 point improvements in RLM tasks and 4 point improvements across multiple recall measures in retrieval.

\textbf{Simpler and Less Computation} Target correctors perform as well or better than Stochastic Negative Mining (SNM) \cite{reddi2019stochastic} despite SNM doing more re-embedding. Similarly, target corrector networks nearly match Dynnibal \cite{monath2023improving} when Dynnibal uses much more computation (Table~\ref{tab:nq_gtr_table_combo}).  Dynnibal is a much more complicated and difficult to implement method.

\subsection{Supervised Dense Retrieval}

\textbf{Setting \& Metrics\hspace{1mm}} We evaluate training methods for supervised dense retrieval models. Each method is provided the same supervised data. All methods use a stale buffer of target representations and use this buffer to form the subset of targets, $S(\alltargets)$, used in computing the truncated softmax. All methods use the same loss (cross-entropy) and optimize parameters of the dual-encoders using gradient descent. The methods differ in their maintenance of the buffer, and, as such, differ in their computational requirements of maintaining this buffer. We measure the computational requirements in terms of how many targets are re-embedding during training\footnote{Our JAX \cite{jax2018github} implementation run on Cloud TPUv3 re-embeds  \textasciitilde 2184 targets per second on each core.\vspace{-3mm}}. We measure re-embedding in terms of the number of targets encoded to indicate the computational expense (even if wall clock time is mitigated using a complicated asynchronous computation). Re-embedding every target even one additional time during training can be problematic if number of targets is large. Furthermore, the initial buffer, created using the initial parameters of the dual-encoder (e.g., a pre-trained language model) can be computed once and used for subsequent training jobs.

\textbf{Data\hspace{1mm}} We evaluate on Natural Questions \cite{kwiatkowski2019natural} with over 21M targets (Wikipedia passages), about 60K training examples (question, passage pairs), and about 3K in dev/test, and MSMARCO \cite{bajaj2016ms} 8.8M targets (web passages), and 500K training examples.

\textbf{Models\hspace{1mm}} We initialize the dual encoder models with  two publicly available pre-trained language models, GTR \cite{ni2022large}, and T5 \cite{raffel2020exploring}. GTR is an encoder model initialized from T5 and further pre-trained for dense retrieval on a large collection of corpora of question/answer pairs. For MSMARCO, we only use T5 since it is included in GTR's training data. We use the base size models, $D=768$, and train separate parameters for $\queryencoder(\point)$ and $\targetencoder(y)$. For the target corrector, we use a two layer MLP. We use 8192 hidden units, a ReLU non-linearity, and a residual connection. 

\begin{table*}[]
    \centering
    \begin{tabular}{ll@{}r r r  r r r | r r r r r@{}}
    \toprule
         &&  \multicolumn{1}{r}{Re-embed} & \multicolumn{5}{c|}{NQ Dev - Recall ($\uparrow$)} & \multicolumn{5}{c}{NQ Test - Recall ($\uparrow$)}  \\
         && Num. ($\downarrow$)  & @1 & @5 & @10 & @20 & @100 &  @1 & @5 & @10 & @20 & @100  \\
         \midrule
        \parbox[t]{2mm}{\multirow{7}{*}{\rotatebox[origin=c]{90}{GTR-base}}} 
        & In-batch & 0 & 17.14 & 46.77 & 58.71 & 69.45 & 85.54 & 37.92 & 64.76 & 72.54 & 78.28 & 87.00 \\
        & Stale & 0 & 33.11 & 62.04 & 70.31 & 78.13 & 89.32 & 46.76 & 68.64 & 75.21 & 80.66 & 87.48 \\
        & Dynnibal$^+$  & 0 & 28.73 & 59.66 & 70.08 & 78.14 & 90.18 & 44.40 & 67.53 & 74.93 & 80.22 & 87.23 \\
        & Corrector $^\text{\faNewspaper[regular]}$ {\scriptsize($\ell_\text{mse}$)} & 0 & 34.98 & 65.03 & 74.01 & 80.77 & 90.82 & 49.61 & 70.72 & 77.04 & 82.33 & 88.28 \\
        & Corrector $^\text{\faNewspaper[regular]}$ & 0 & \bf 35.78 & \bf 66.74 & \bf 75.06 & \bf 81.52 & \bf 91.37 & \bf  50.61 & \bf 71.00 & \bf 77.73 & \bf 82.66 & \bf  88.39 \\
        \cmidrule{2-13}
        & Dynnibal$^+$ & 42M & 35.86 & 66.54 & 75.04 & 81.40 & 91.27 & \underline{50.55} & \underline{71.69} & \underline{78.25} & \underline{83.35} & \underline{88.73} \\
        & SNM$^\dagger$ & 80M & 32.03 & 64.01 & 73.72 & 81.37 & 91.47  & 49.14 & 69.89 & 
        77.12 & 82.19 & 87.95 \\
        & Exhaustive & 1.68B & \underline{36.29} & \underline{67.08} & \underline{75.55} & \underline{82.07} & \underline{91.73} & 50.30 & 71.55 & 78.12 & 82.83 & 88.59 \\
         \midrule
        \parbox[t]{2mm}{\multirow{7}{*}{\rotatebox[origin=c]{90}{T5-base}}} 
        & In-batch & 0 & 9.93 & 28.07 & 37.17 & 45.54 & 64.06  & 23.40 & 47.50  & 56.39 & 65.34  & 77.97  \\
        & Stale & 0 & 16.79 & 36.85 & 44.82 & 51.79 & 67.35 &  27.65 & 50.19 & 59.28 & 66.98 & 78.95 \\
        & Dynnibal $^+$ & 0 & 17.42 & 39.65 & 48.75 & 57.36 & 73.03 & 29.72 & 53.99 & 63.38 & 70.61 & 80.94 \\
        & Corrector $^\text{\faNewspaper[regular]}$ & 0 & \bf 23.64 & \bf 47.69 & \bf 56.68 & \bf 64.65 & \bf 79.03 & \bf 36.65 & \bf 59.25 & \bf 68.06 & \bf 73.71 & \bf 83.13 \\
        \cmidrule{2-13}
        & Dynnibal $^+$ & 42M & 23.71 & 46.63 & 55.75 & 63.88 & 79.46 & 36.65 & 59.31 & 67.65 & 74.46 & 83.13 \\
        & Dynnibal $^+$ & 80M & \underline{24.76} & 47.69 & 56.82 & 64.90 & \underline{80.15} & 36.90 & 59.97 & 68.23 & 74.54 & 83.35 \\
        & SNM $^\dagger$  & 80M &  22.55 & 46.86 & 55.72 & 64.19 & 80.40  & 35.93 & 59.06 & 67.48 & 73.66 & 82.85 \\
        & Exhaustive & 1.68B & 24.70 & \underline{48.21} & \underline{57.18} & \underline{65.39} & 79.94 & \underline{37.34} & \underline{60.42} & \underline{68.70} & \underline{74.76} & \underline{83.41} \\
        \bottomrule
    \end{tabular}
    \vspace{-3mm}
        \caption{\textbf{Natural Questions} \cite{kwiatkowski2019natural}. $\text{\faNewspaper[regular]}$ This paper; $\dagger$ \cite{reddi2019stochastic}' $+$ \cite{monath2023improving}. We measure the performance of dense retrieval models trained with different softmax approximations via hard negatives. We find that our proposed approach nearly matches the performance of exhaustively re-embedding all targets. Similarly, our approach requires significantly less re-embedding of targets than Dynnibal, which requires at least one exhaustive re-embedding to get competitive results. \textbf{Bold-face numbers} indicate best performance with zero re-embeddings performed at training time; \underline{underlined numbers} indicate best performance using re-embedding at training time.}
    \label{tab:nq_gtr_table_combo}
    \vspace{-5mm}
\end{table*}

 We compare the following approaches: \textbf{Target Corrector Networks} (this paper): At the first training step, we initialize the buffer with vector representations of every target. At every subsequent step, we use the target corrector network to produce a new representation of the targets, without running the target-encoder, simply by running our small MLP corrector on the stale representations. The stale buffer representations are never updated during training. \textbf{Stale}: We initialize the buffer of targets at the first step of training and do not update it throughout training. We experimented with both freezing the target encoder parameters $\targetencoder(\target)$ and allowing them to be updated despite the stale buffer. We found updating the parameters to be slightly better and report those results. \textbf{Exhaustive}: We exhaustively re-embed all of the targets in the buffer every 500 steps of training. \textbf{Stochastic Negative Mining} (SNM; \citealt{reddi2019stochastic}): Instead of storing every target in the buffer, we store a subset of targets sampled uniformly at random. We re-sample and re-embed this buffer every 500 steps. We use a buffer size of 1M targets. \textbf{Dynnibal} \cite{monath2023improving}: This complicated approach maintains a buffer using a low-rank regression model as a part of tree index structure. The regression model is updated every 500 steps on a sub-sample of targets, unlike our approach which is trained jointly. Furthermore, to get good performance, Dynnibal performs costly full buffer re-embedding periodically throughout training. We needed to perform two such re-embeddings. Dynnibal with fewer refreshes does not perform as well.

\begin{table}
\vspace{2mm}
    \centering
    \begin{tabular}{@{}l @{} r  r  r  r@{}  r@{}}
    \toprule
          & Hidden  & Steps/sec & R@1 & R@20 \ \  & R@100 \\
          & Units \\
         \midrule
        Exhaustive & - & 0.43&	36.29&	82.07 \ \ &	91.70 \\
        Corrector & 8192 & 	0.83 &	35.78&	81.52 \ \ &	91.37   \\
        Corrector & 4096 & 	1.10&	35.53&	81.63 \ \ &	91.07   \\
        Corrector & 2048 & 1.83&	35.55&	81.07 \ \ &91.08 \\
    \bottomrule
    \end{tabular}
    \vspace{-2mm}
    \caption{\textbf{Steps-Per-Second Comparison.} We compare on NQ-dev the performance and speed of corrector networks and exhaustive re-encoding. Models are finetuned GTR-base.}
    \label{tab:timecomp}
    \vspace{-5mm}
\end{table}

In Table~\ref{tab:nq_gtr_table_combo}, our  
target corrector network approach greatly improves upon the stale approach, especially in Recall@1, 5, 10. We observe a nearly 5 point improvement at R@10 in the dev set and a 4 point improvement in R@1 on the test over the stale approach. Our approach nearly matches the performance of the computationally intensive exhaustive approach. Furthermore, we perform comparably to the more expensive SNM and Dynnibal methods. We perform better than Dynnibal for the same amount of re-embedding. While doubling the number of index refreshes may appear negligible, having to re-embed the buffer during training can be computationally burdensome, especially as the number of targets grows. Using a buffer created from the initial parameters of the dual-encoder as with our approach, allows the buffer to be constructed once ahead of time and re-used across both training and tasks. Dynnibal requires hand tuning to get the re-embedding schedule correct. 

Table~\ref{tab:nq_gtr_table_combo} also compares dual-encoder initialization. GTR is pre-trained for retrieval and hence achieves better results. T5 is not pre-trained for retrieval and requires more adaptation for the retrieval task. We observe that SNM struggles more to match the performance of Exhaustive with T5. Furthermore, Dynnibal requires more full index refreshes to get competitive results. Our method is able to achieve nearly as good results as the Exhaustive approach and Dynnibal (with re-embedding) despite never needing to re-embed. 

We also report timing comparisons in terms of steps-per-second between corrector networks (of two sizes) and exhaustive re-encoding of the targets. These can be found in table \ref{tab:timecomp}. We can see that both small and large corrector networks lead large speed gains over exhaustive re-encoding with minimal performance gains. This indicates that corrector networks can have practical training time efficiency gains over exhaustive methods.

See Appendix~\ref{appendix:additionalresults} for additional results (MSMARCO, other ablations) and further discussion.

\begin{table}
    \centering
    \begin{tabular}{@{}l @{}r r r r r@{}}
    \toprule
         &Re-embed & Retr. & NQ & TQA & HPQA  \\
         & Num. ($\downarrow$) \\
         \midrule
         No Retr.  &0& - & 25.4 & 26.1 & 14.5 \\
         \midrule
        Frozen Retr. &0 & GTR  & 48.4 & 55.1 & 28.0 \\
        Corrector & 0 & GTR   & 52.3 & 66.4 & 36.7 \\
        Exhaustive & 1.1B & GTR  & 52.4 & 66.5 & 33.8 \\
        \midrule
        Frozen Retr. & 0 & T5  & 13.34  & 12.15 & 13.37 \\
        Corrector& 0 & T5   &  48.1  & 63.73 & 21.97 \\
        Exhaustive & 1.1B & T5  & 48.3 & 66.03 & 25.45 \\
    \bottomrule
    \end{tabular}
    \vspace{-2mm}
    \caption{\textbf{Exact Match Accuracy, RLM Training}. We find that our target corrector approach can match the performance of the fully refreshed index while never re-embedding the targets across NQ, TriviaQA (TQA), and HotPotQA (HPQA).}
    \label{tab:rlm}
    \vspace{-5mm}
\end{table}

\subsection{Retrieval Augmented Language Models}

\textbf{Setting \& Metrics\hspace{1mm}} We evaluate the latent variable use case of training the retriever in a retrieval-augmented language model (RLM), as described in Section~\ref{sec:RLM}. We will compare approaches for training in terms of their re-embedding costs. 

\textbf{Datasets\hspace{1mm}}  We evaluate on the three question answering datasets: TriviaQA \cite{joshi2017triviaqa}, NQOpen \cite{kwiatkowski2019natural}, and HotPotQA \cite{yang2018hotpotqa}. We use 256 token passages from a 2018 Wikipedia snapshot as the collection of targets, $\alltargets$, with 28M targets.

\textbf{Models\hspace{1mm}} We initialize the retriever with GTR-base or T5-base and use T5-base as the reader  in Fusion-In-Decoder \cite{izacard2020leveraging}. We use 32 retrieved documents in all experiments. The target corrector is a two-layer MLP.

We compare the following approaches: \textbf{Target Corrector Network}: Target corrector is used to retrieve $S(\alltargets)$ at training time. We embed the targets at the beginning of training and never update the buffer. \textbf{No Retrieval} \cite{roberts2020much}: The retriever is not used. The reader model is trained on the dataset and uses only its parameters to answer the questions. \textbf{Frozen Retrieval} \cite{izacard2020leveraging}: Every target is embedded once at the beginning of training. Only the parameters of the reader model are trained (updating the retriever parameters did not improve performance). \textbf{Exhaustive}: Jointly training the retriever and reader, we exhaustively re-embed all 28M targets every 500 steps.

In Table~\ref{tab:rlm}, we report exact match accuracy on the held-out validation sets. Our proposed target corrector matches or nearly matches the performance of the exhaustive re-embedding approach without ever having to re-embed the buffer. This is a dramatic reduction in computational cost, as the exhaustive approach ends up embedding all 28M passages 40 times (1.1B re-embeddings). Target correctors greatly outperform the approaches that do not use retrieval (by more than 20 points) and the frozen retriever approach (by at least 4 points and by up to 10 points).

\begin{figure}
    \centering
    \includegraphics[width=0.95\linewidth]{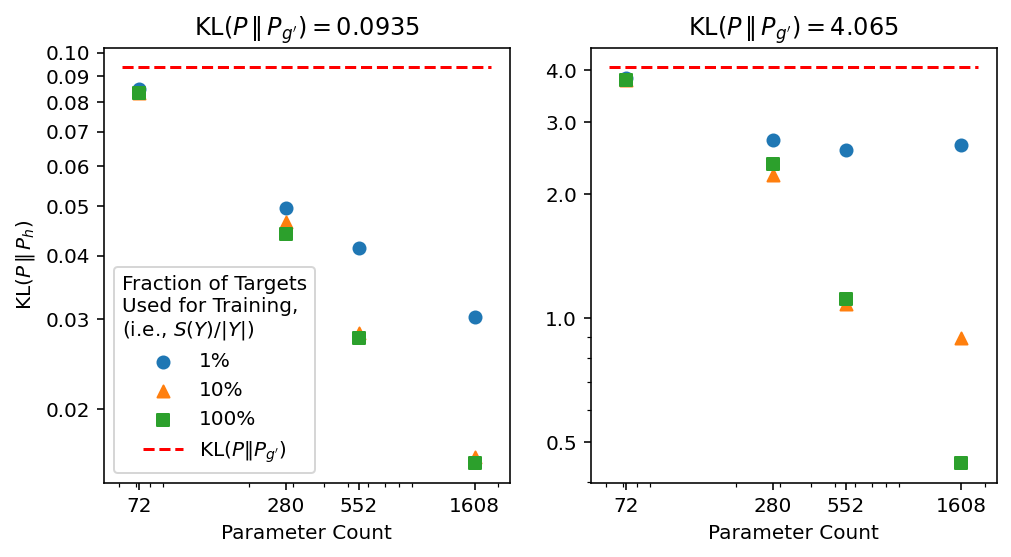}
    \vspace{-3mm}
    \caption{\textbf{Training Sample Size} The figure shows the trade-offs between the complexity of $h$ (parameter count), the approximation error, $KL(P||P_\approxmodel)$, and the fraction of samples used for training. The left hand side shows only somewhat stale representations. The right hand side shows significantly stale representations. Using a higher fraction of training samples is needed with more staleness.}
    \label{fig:sample_size}
    \vspace{-6mm}
\end{figure}

\subsection{Synthetic Experiments}
\label{sec:synth}

In these experiments, we measure the ability of proposed
corrector network to approximate categorical distributions parameterized by the softmax by training the corrector network, $\approxmodel$, without training parameters of the dual-encoder.

\textbf{Setting \& Metrics} We will measure the ability of proposed
corrector network to approximate categorical distributions parameterized by the softmax. We do so by training the corrector network, $\approxmodel$, in isolation, e.g., only training the parameters of the corrector network, $\approxparameters$, without training parameters of the dual-encoder for a particular task. We measure the quality of approximation using the KL-divergence between the true categorical distribution $P(\target|\point)$ (Equation~\ref{eq:softmax}) and the approximate distribution given by the corrector network $P_\approxmodel(\target|\point)$ (Equation~\ref{eq:phyx}). We measure the complexity of the corrector network by its parameter count, $|\approxparameters|$. We measure staleness, i.e., the difficulty of correcting a set of stale representations, by the KL-divergence between the true categorical distribution $P(\target|\point)$ and the distribution $P_{\staletargetencoder}(\target|\point) \propto \exp(\temperature\langle \queryencoder(\point), \staletargetencoder(\target) \rangle)$.

\begin{figure}
    \centering
    \includegraphics[width=0.8\linewidth]{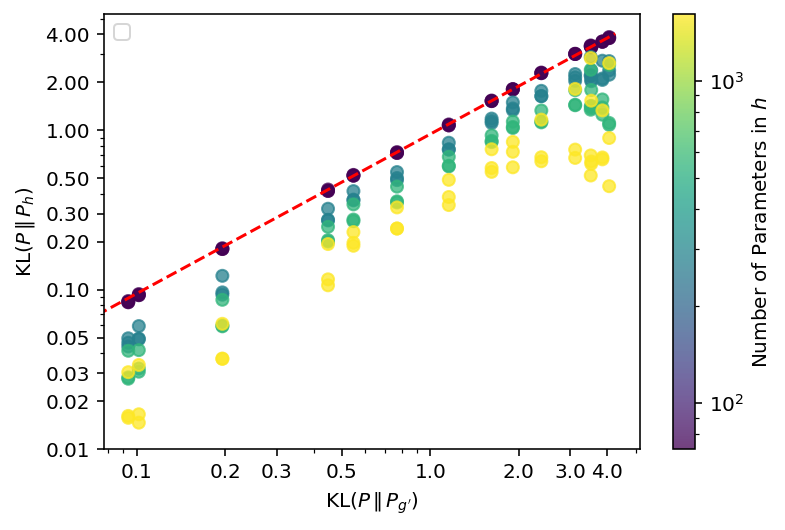}
    \vspace{-4mm}
    \caption{\textbf{Parameter Count} We plot the
    KL divergence using the stale embeddings ($\operatorname{KL}(P \Vert P_{g'})$, on the x-axis) against that of the trained correction models ($\operatorname{KL}(P \Vert P_\approxmodel)$, on the y-axis). Parity is indicated by the dashed red line, demonstrating that the trained correction model is significantly better than using the stale embeddings. Increasing the parameter count of $h$ is more important when the discrepancy between stale and current embeddings is higher, indicated by a larger improvement toward the right of this plot.}
    \label{fig:param_count}
    \vspace{-12mm}
\end{figure}

\textbf{Data Generation} We directly generate vector representations corresponding to data points and targets. That is, rather than having a dual-encoder model provide the vector representation of a data point or target, we directly generate synthetic data corresponding to $f(x)$, $g(y)$, and $g'(y)$. We generate 4096 targets in $D=8$ dimensions from a mixture of 20 Gaussians to represent $g'(y)$. To generate $g(y)$, we transform $g'(y)$ by feeding the points into randomly initialized  MLPs with up to $2$ hidden layers of size $D, 2D, 4D$ or $8D$, with RELU activation and residual connections. We vary the complexity of the MLP and variance of the initialization to create embeddings $g(y)$ to model a variety of settings of the extent of the staleness (${\WC(\DCal_Y, \tilde{\DCal}_Y)}$).

\textbf{Corrector Network} In these experiments, we vary the parameter count of the corrector network $h$ and number of hidden layers, using between $0$ and $2$ hidden layers with hidden dimension of $D, 2D, 4D,$ or $8D$. We use ReLU nonlinearity with residual connections. We optimize the parameters of the corrector network using Adam with learning rate $0.03$, and stop when the loss has not improved for at least $100$ epochs or we reach $1000$ epochs of training.

\textbf{Varying $|S(\alltargets)|$, number of targets used for training} In Figure~\ref{fig:sample_size}, we explore trade-offs between the complexity in terms of the parameter count $|\Psi|$ of $h$ (x-axis); the approximation error $\operatorname{KL}(P \Vert P_\approxmodel)$ after applying the trained correction model (y-axis); and the fraction of samples used for training $\approxmodel$. We report the complexity of the transformation from $\staletargetencoder$ to $\targetencoder$ in terms of  $\operatorname{KL}(P \Vert P_{\staletargetencoder})$ above each pane. 
Using a higher fraction of training samples is needed when there is more staleness. When the drift is more significant (right-hand pane), we observe that using increased parameters with a smaller fraction of samples does lead to overfitting. In this setting, it seems that sampling $10\%$ of the targets is generally sufficient.

\textbf{Varying Complexity of the Target Corrector Network} 
In order to explore how the KL divergence of our approximation may change with respect to the staleness of the embeddings $\staletargetencoder$, we train our embedding model to approximate the distributions $P$.
In Figure \ref{fig:param_count}, we explore how the KL divergence of our approximation may change with respect to the staleness of the embeddings $\staletargetencoder$,
We can obtain a significant reduction in KL divergence via the correction model (on the y-axis) across a wide variety of drifts (as measured by $\operatorname{KL}(P \Vert P_{\staletargetencoder})$). Increasing parameter count is always effective, but it yields greater benefit when approximating a distribution with greater divergence. 
\vspace{-2mm}
\section{Related work}

\textbf{Energy-based Models\hspace{1mm}} Many similar ideas of training small parametric models to aid the training of other models has been widely studied in energy-based models, such as CoopNets \cite{xie2018cooperative}, VERA \cite{grathwohl2021mcmc}, and others \cite{grathwohl2020learning}. In this setting models can be trained to skirt around intractable computations required in main-model training. 

\textbf{Amortized Inference\hspace{1mm}} There are many approaches that speed up sampling by fitting parametric models such as feed-forward neural networks \cite{marino2018iterative,naderiparizi2022amortized}. 

\textbf{Softmax Approximations\hspace{1mm}} Previous work has considered approximations to softmax via kernel methods \cite{blanc2018adaptive,rawat2019sampled} when there are trainable parameters for every target (rather than an encoder). Sampling-based approaches are widely used as well \cite{vembu2009probabilistic,zaheer2017canopy, monath2023improving}.

\textbf{Adapters\hspace{1mm}} Adapter methods, which train small parametric components of larger networks \cite{houlsby2019parameter} bear resemblance to our approach. 
However, our approach is distinct in that it operates only on the output layer of the neural models, not intermediate layers.
\section{Conclusion}
We present target corrector networks for approximating the softmax function during the training of dual encoder models and retrieval augmented language models. The target corrector networks learn to update a stale buffer of target representations.  We investigate the generalization properties of the corrector models theoretically. We furthermore show empirically how our correct model approach can be used to train models (both supervised retrievers and retrieval augmented language models) matching the accuracy of models that use 4x-80x the computational budget during training.

\section*{Impact Statement}

Our work proposes new more efficient ways of training of retrieval models. 
Retrieval models both in their own right and in combination with language models
have wide and applicable uses. The techniques of this paper are about improving
training efficiency. As such, better models could be produced faster, bringing to bear
all the responsibilities of model creators in terms of understanding the successes,
limitations, and biases of the model. Future work could consider the question of 
how different training strategies affect the way in which retrieval models
have broad impact. Of particular interest to this paper could be the way in
which staleness when computing the truncated softmax plays a role in such a study.

\bibliography{references}
\bibliographystyle{icml2024}

\newpage
\appendix
\onecolumn
\section{Analysis: Proofs}
\label{appendix:proofs}
\gapTprSpr*
\emph{Proof}. We bound the gap between true population risk and stale population risk. 
Recall that  $\GC_{\ell,  \FC}$ is the induced function class: $\GC_{\ell,  \FC} = \{y \mapsto  \ell(f(y), g(y)) : f \in \FC\}.$
Now note that 
\begin{align}
\label{eq:dist-shift}
&R_{\ell, f}(\DCal_Y) - R_{\ell, f}(\tilde{\DCal}_Y) \\
&= \mathbb{E}_{Y \sim \DCal_{Y}}\left[\ell(f(Y), g(Y)\right] - \mathbb{E}_{Y' \sim \tilde{\DCal}_{Y}}\left[\ell(f(Y'), g(Y')\right] \nonumber \\
& \leq \sup_{\lambda \in \GC_{\ell,  \FC}} \left(\mathbb{E}_{Y \sim \DCal_{Y}}\left[\lambda(X)\right] - \mathbb{E}_{Y' \sim \tilde{\DCal}_{Y}}\left[\lambda(Y')\right] \right) \nonumber \\
&\overset{(i)}{=} L \cdot \sup_{\lambda \in \GC_{\ell, \FC}} \left(\mathbb{E}_{Y \sim \DCal_{Y}}\left[\frac{\lambda(Y)}{L}\right] - \mathbb{E}_{Y' \sim \tilde{\DCal}_{Y}}\left[\frac{\lambda(Y')}{L}\right] \right) \nonumber \\
&\overset{(ii)}{\leq} L \cdot \sup_{\lambda \in {\rm Lip}_1(\rho)} \left(\mathbb{E}_{X \sim \DCal_{Y}}\left[\lambda(X)\right] - \mathbb{E}_{Y' \sim \tilde{\DCal}_{Y}}\left[\lambda(Y')\right] \right), \nonumber \\
&\overset{(iii)}{=}\WC(\DCal_Y, \tilde{\DCal}_Y)\\
&\overset{(iv)}{\leq}\|\DCal_Y - \tilde{\DCal}_Y\|_{TV}\\
&\overset{(v)}{=} \frac12 \sum_{y\in\mathcal{Y}} |\mathsf{softmax}(g(y)) - \mathsf{softmax}(g'(y))|\\
&\overset{(vi)}{\leq} \frac12 \|g - g'\|_1
\end{align}
where $(i)$ follows by dividing and multiply by $L$; $(ii)$ follows as, for any $\lambda \in \GC^h_{\ell,  \FC}$, we have $\frac{\lambda}{L}$ to be $1$-Lipschitz; $(iii)$ follows from Kantorovich-Rubinstein duality~\citep{villani2008optimal}; $(iv)$ follows from Corollory 6.14 in \citet{villani2008optimal}; $(v)$ follows from definition; and $(vi)$ follows from softmax Lipschtiz constant being 1. 
As $g$ and $g'$ are output from the same neural network but with parameters perturbed by $u$, then it follows that $\|g-g'\|_1 \leq L\|u\|$. $\qed$

\staleToEmpirical*

\emph{Proof} We need to connect the stale population risk to the empirical risk we are actually minimizing:
\begin{align}
\label{eq:pop-to-emp}
R_{\ell, \tilde{f}_n}(\tilde{\DCal}_Y)&=\mathbb{E}_{\tilde{\DCal}_{Y}}[\ell(\tilde{f}_n(Y), g(Y))]  \nonumber \\
&\leq \mathbb{E}_{\tilde{\SCal}_n}[\ell(\tilde{f}_n(Y), g(Y))] + \sup_{f \in \FC} \big|  \mathbb{E}_{\tilde{\DCal}_{Y}}[\ell(f(Y), g(Y))] - \mathbb{E}_{\tilde{\SCal}_n}[\ell(f(Y), g(Y))] \big| \nonumber \\
&\overset{(i)}{=} \mathbb{E}_{\tilde{\SCal}_n}[\ell(\tilde{f}_n(Y), h(X))] + \sup_{g \in \GC_{\ell, 
\FC}} \Big|\mathbb{E}_{\tilde{\DCal}_Y}[g(Y)] - \mathbb{E}_{\tilde{\SCal}_n}[g(Y)] \Big| \nonumber \\
&\overset{(ii)}{\leq}  \mathbb{E}_{\tilde{\SCal}_n}[\ell(\tilde{f}_n(Y), h(X))] + {\RF}_{\tilde{\SCal}_n}(\GC_{\ell,  \FC}) \nonumber \\
&= R_{\ell,\tilde{f}_n}(\tilde{\SCal}_n) + {\RF}_{\tilde{\SCal}_n}(\GC_{\ell,  \FC}),
\end{align}
where inequality $(i)$ follows from the definition of $\GC_{\ell,  \FC}$ and $(ii)$ from the standard symmetrization argument~~\citep{devroye2013probabilistic,mohri2018foundations} for Radamacher complexity. $\qed$

\maintheorem*

\begin{proof}
As  mentioned in the text $\FC$ might be too large function class and we would like to utilize the restricted function class $\FC^{g'}$. The previous derivation would go through using this restricted class and we will obtain the Rademacher complexity of ${\RF}_{\tilde{\SCal}_n}(\GC_{\ell,  \FC^{g'}})$ instead.
To compare the two Rademacher complexity, observe that
\begin{align}
{\RF}_{\tilde{\SCal}_n}(\GC_{\ell,  \FC^{g'}})
&= \frac{1}{n}\mathbb{E}_{\bm{\sigma}}\left\vert \sup_{\lambda \in \GC_{\ell,  \FC^{g'}}}  \sum_{i} \sigma_i \lambda(y_i) \right\vert \nonumber  \\
&\overset{(i)}{=} \frac{1}{n}\mathbb{E}_{\bm{\sigma}}\left\vert \sup_{h \in \HC}  \sum_{i} \sigma_i  \ell(h\circ g'(y), g(y)) \right\vert \nonumber \\
&= \frac{1}{n}\mathbb{E}_{\bm{\sigma}}\left\vert \sup_{f \in \FC^{g'}}  \sum_{i} \sigma_i  \ell(f(y), g(y)) \right\vert \nonumber \\
& \overset{(ii)}{\leq} \frac{1}{n}\mathbb{E}_{\bm{\sigma}}\left\vert \sup_{f \in \FC}  \sum_{i} \sigma_i \ell(f(y), g(y)) \right\vert \nonumber \\
&= \frac{1}{n}\mathbb{E}_{\bm{\sigma}}\left\vert \sup_{\lambda \in \GC_{\ell,  \FC}}  \sum_{i} \sigma_i \lambda(y_i) \right\vert \nonumber  \\
&= {\RF}_{\tilde{\SCal}_n}(\GC_{\ell,  \FC}),
\end{align}
where $(i)$ follows from definition of $\GC_{\ell,  \FC^{g'}}$ and $\FC^{g'}$; and $(ii)$ holds because $\FC^{g'} \subset \FC$.

Now, the standard concentration results for empirical Rademacher complexity implies that, with probability at least $1 - \delta$, we have the following.
\begin{align}
\label{eq:rade-conc}
{\RF}_{\tilde{\SCal}_n}(\GC_{\ell,  \FC^{g'}}) & \leq \mathbb{E}_{\tilde{\SCal}_n \sim \tilde{\DCal}^{\otimes n}_Y}\left[ \RF_{\tilde{\SCal}_n}(\GC_{\ell,  \FC^{g'}})\right] + \OC\Big(\sqrt{{\frac{\log(1/\delta)}{n}}}\Big) \\
&=
\tilde{\RF}_{n}(\GC_{\ell,  \FC^{g'}}) +  \OC\Big(\sqrt{{\frac{\log(1/\delta)}{n}}}\Big).
\end{align}

Combining results from Eq.~\ref{eq:dist-shift}, \ref{eq:pop-to-emp}, and \ref{eq:rade-conc}, we obtain that with probability at least $1-\delta$,
\begin{equation}
    R_{\ell, f}(\DCal_Y) \leq R_{\ell,\tilde{f}_n}(\tilde{\SCal}_n)
    + \underbrace{\WC(\DCal_Y, \tilde{\DCal}_Y)}_{\leq L \|u\|}
    + \tilde{\RF}_{n}(\GC_{\ell,  \FC^{g'}})
    + \OC\Big(\sqrt{{\frac{\log(1/\delta)}{n}}}\Big)
\end{equation}
\end{proof}

\clearpage
\begin{figure*}
    \centering
    \includegraphics[width=0.95\linewidth]{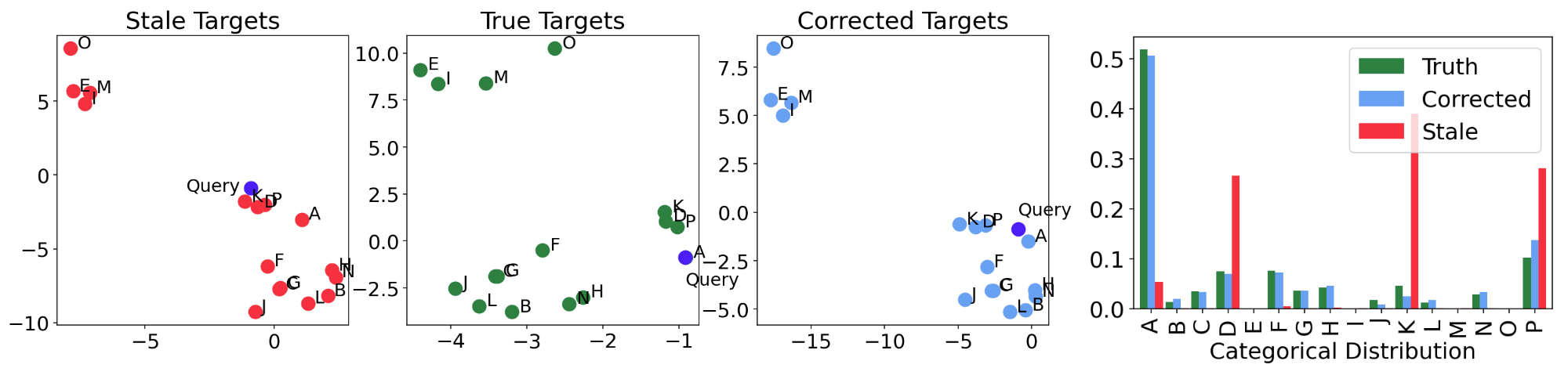}
    \caption{A toy experiment where stale and true targets are distributed around the unit circle, and the corrected targets based on the learned approximation are also depicted. The associated distribution over targets ($\beta=20$) based on the point identified.}
    \label{fig:distirbution shift}
\end{figure*}
\section{Experiments}
\label{appendix:experiments}

\subsection{Experimental Details}
\label{app:details}

\textbf{Training Details} We train all models, the dual-encoders and the corrector model, jointly using Adam \cite{kingma2014adam}. We implement the training procedure using stop-gradients so that the corrector model loss only changes the corrector model parameters and dual-encoder loss the dual-encoder ones. We form the subset of targets for the truncated softmax, $S(\alltargets)_\point$, using the top-64 closest targets to the given query according to a particular training procedure's buffer and 64 targets chosen uniformly at random. We use a minibatch size of 128 examples and share the truncated softmax targets across all examples in the minibatch $mb$ , e.g., $\bigcup_{\point \in mb} S(\alltargets)_\point$. We use 40K steps for retrieval training and 20K steps for RLM training. We combine the task losses and corrector network loss together. We experimented with a weight parameter applied to the corrector network. We use a weight value of 10.0.

\subsection{Additional Dense Retrieval Results}
\label{appendix:additionalresults}
In Table~\ref{tab:msmarco_t5}, we report performance on MSMarco using T5-base as the encoder. Here, with fewer targets, Stochastic Negative Mining provides a better approximation as a larger fraction of targets is re-encoded. Our method is still able to nearly match the performance of the exhaustive approach. We are able to achieve such results without having to re-embed the buffer.

\textbf{Using Accelerator Memory to Store the Buffer} In these experiments, we store the buffer of targets on the accelerator, making implementation of our approach training extremely easy. However, it could be the case that not all targets can fit into a buffer on accelerator memory. In such settings, our approach could still be used in the following ways: (1) subsample targets randomly (perhaps changing the subset periodically) to fit on device memory akin to a combination of our corrector approach stochastic negative mining, which would require no re-encoding of targets, or (2) use our approach to re-rank stale representations initially retrieved from CPU memory. 

\begin{table}
    \centering
    \begin{tabular}{@{}l r r r r r@{}}
    \toprule
         Performance & Num. Re-Embed & R@1 & R@5 & R@10 & R@100 \\
         \midrule
        Stale & 0 & 10.11 & 27.70 & 36.33 & 63.69 \\
        SNM & 20M & 18.18 & 43.48 & 54.68 & 82.18   \\
        Dynnibal & 8M & 18.23 & 43.15 & 54.56 & 82.24 \\
        Target Corrector & 0 & 17.07 & 40.78 & 51.56 & 79.29 \\
        Exhaustive & 352M & 18.18 & 44.97 & 55.58 & 83.69\\
    \bottomrule
    \end{tabular}
    \caption{\textbf{MSMARCO, T5 Initialization.} In this experiment, we measure performance of methods on MSMARCO. With fewer targets than Natural Questions, the gap between stochastic negative mining and the refreshed index is reduced.}
    \label{tab:msmarco_t5}
\end{table}

\paragraph{Comparisons to 2-Round Training}
Several recent works such as \cite{qu2021rocketqa} which addresses difficulties of training dense retrieval models proposes to train in 2 stages. First all targets are encoded (using random or pre-trained model). Then the model is trained for one half of the desired iterations. Then the new model's parameters are used to re-encode the targets a single time. Then the model is trained for the remaining steps using these re-encoded targets. We compare this approch with corrector networks in Table \ref{tab:2roundcomp}. We see that when using GTR-base, the performance for all methods is quite similar (with corrector networks and exhaustive re-encoding slightly outperforming). When T5-base is used though, we find the performance of corrector networks and exhaustive re-encoding to notably out-perform the 2-step procedure. We attribute this to GTR being a better initialization for the model. In this case we would expect its parameters (and therefore its target embeddings) to change less from pre-training to fine-tuning, meaning that there is less embedding drift and therefore less bias when using the 2-step procedure. 

\begin{table}
    \centering
    \begin{tabular}{@{}l r r r r r r@{}}
    \toprule
         
Method	&Base	&R@1	&R@5	&R@10	&R@20 &R@100\\
         \midrule
       Two Round&	T5&	29.50&	53.40&	62.49&	70.64&	80.94 \\
        Corrector&	T5&	36.65&	59.25&	68.06&	73.71&	83.13  \\
        Exhaustive&	T5&	37.34&	60.42&	68.70&	74.76&	83.41\\
        \midrule
        Two Round&	GTR&	49.06&	70.06&	76.76	&81.17	&87.95\\
Corrector&	GTR	&49.61&	70.72&	77.04&	82.33&
88.28\\
Exhaustive&	GTR&	50.30&	71.55&	78.12&	82.83&	88.59\\
    \bottomrule
    \end{tabular}
    \caption{\textbf{Comparison with 2-round training.} We compare the performance of corrector networks, exhaustive re-encoding, and 2-round training. Results are presented on the NQ Test set.}
    \label{tab:2roundcomp}
\end{table}

\paragraph{Comparisons with and without uniform negatives} In our main experiments (as stated in Appendix \ref{app:details}) we train with hard negatives \emph{and} uniform negatives. Initial experiments showed that adding uniform negatives lead to improved performance in some settings. We provide some additional results ablating this choice using exhaustive re-encoding. These can be found in Table \ref{tab:unifnegcomp}. We can see that this choice provides negligible improvement on the reported benchmarks (although we believe its worth trying in other settings). 

\begin{table}
    \centering
    \begin{tabular}{@{}l r r r@{}}
    \toprule
         Method & Base &  R@1 & R@100 \\
         \midrule
        With Uniform & T5	&24.70&	79.94 \\
        Without Uniform& T5&	24.87	&79.82  \\
        \midrule
       With Uniform	& GTR & 36.29&	91.73 \\
       Without Uniform & GTR&	36.53&	91.70\\
    \bottomrule
    \end{tabular}
    \caption{\textbf{Uniform Negatives Comparison.} We compare the performance of exhaustive re-encoding with and without the addition of uniform (in-batch) negatives. Results are presented on the NQ Dev set.}
    \label{tab:unifnegcomp}
\end{table}

\subsection{Retrieve and Read}
Note that in this setting we do not share the subset of targets $S(\alltargets)$ across the examples in the batch, nor do we use targets sampled uniformly at random.

The versions of the retrieve-and read datasets are:
\begin{itemize}
    \item  TriviaQA: {\url{https://www.tensorflow.org/datasets/catalog/trivia_qa#trivia_qaunfilterednocontext}}
    \item NQOpen \url{https://www.tensorflow.org/datasets/catalog/natural_questions_open}
    \item HotPotQA \url{https://www.tensorflow.org/datasets/community_catalog/huggingface/hotpot_qa}
\end{itemize}

\clearpage
\subsection{Beyond Stale Representations: Approximating Large Models with Small Models}

In this experiment, we focus on sampling in isolation. We sample a batch of input points and we measure the ability of our method to approximate one dual encoder model with another. In particular, we study a case where we approximate a large dual encoder with a small model. We approximate the GTR large model \cite{ni2021large} (e.g., $\targetencoder(\cdot)$) with the GTR small model(e.g., $\staletargetencoder(\cdot)$). In Table~\ref{tab:small_large},  we report nearest neighbor precision, i.e., measuring the overlap in the top-K neighbors from the large model's neighbors at 10, 20, and 100 on the dataset Arguana \cite{wachsmuth2018retrieval} from the BEIR benchmark \cite{thakur2021beir}. We use 32 samples for each query to train the correction model. We find that overlap amongst smaller K seems to be better aligned using our method.

\begin{table}
    \centering
    \begin{tabular}{@{}l r r r r@{}}
    \toprule
         Performance & P@10 & P@20 & P@100  \\
         \midrule
        $\staletargetencoder(\cdot)$ & 67.57  & 67.73 & 57.57 \\
        $\approxmodel \circ \staletargetencoder(\cdot)$ & 76.87 & 73.32 & 53.55 \\
    \bottomrule
    \end{tabular}
    \caption{\textbf{Approximating Large Models with Small Models} On the dataset Arguana \cite{wachsmuth2018retrieval}, we use our method to warp the embedding space of GTR Small so that it is better aligned with GTR Large. Note that here we present nearest neighbor precision, i.e., the overlap in the top-K neighbors from the large model at 10, 20, and 100. We use 32 samples for each query to train the correction model.}
    \label{tab:small_large}
\end{table}


\end{document}